\documentclass[11pt]{amsart}%
\usepackage{amssymb,amsmath,amsfonts,latexsym,amsthm,geometry,graphicx}
\usepackage{amsmath}
\usepackage{amsfonts}
\usepackage{amssymb}
\usepackage{graphicx}
\usepackage{color}%
\setcounter{MaxMatrixCols}{30}
\usepackage{mathrsfs,amsmath}
\providecommand{\U}[1]{\protect\rule{.1in}{.1in}}
\providecommand{\U}[1]{\protect\rule{.1in}{.1in}}
\providecommand{\U}[1]{\protect\rule{.1in}{.1in}}
\providecommand{\U}[1]{\protect\rule{.1in}{.1in}}
\geometry{left=2cm,right=2cm,top=2cm,bottom=2cm,headheight=2.5mm}

\newtheorem{theorem}{Theorem}[section]
\newtheorem{corollary}[theorem]{Corollary}

\theoremstyle{definition}

\newtheorem{example}[theorem]{Example}

\newtheorem{remark}[theorem]{Remark}
\newtheorem{definition}[theorem]{Definition}
\geometry{left=3cm,right=3cm,top=2cm,bottom=2cm,headheight=3mm}

\begin{document}	
	
\title[Language learnability in the limit for general metrics: a Gold-Angluin result]{Language learnability in the limit for general metrics: a Gold-Angluin result}

\author[F. C. Alves]{Fernando C. Alves}
\thanks{Corresponding author's email: cabralalvesf@gmail.com \\ \indent Departamento de Matem\'{a}tica and Programa de Pós-graduação em Linguística (PROLING).\\ \indent Universidade Federal da Para\'{\i}ba \\ \indent 58.051-900 - Jo\~{a}o Pessoa, Brazil}

\keywords{Language learnability; Learning in the limit; General metrics; Gold-Angluin result.}
\subjclass[2010]{}

\begin{abstract}  
In his pioneering work in the field of Inductive Inference, Gold (1967) proved that a set containing all finite languages and at least one infinite language over the same fixed alphabet is not learnable in the exact sense. Within the same framework, Angluin (1980) provided a complete characterization for the learnability of language families. Mathematically, the concept of exact learning in that classical setting can be seen as the use of a particular type of metric for learning in the limit. In this short research note we use Niyogi's extended version of a theorem by Blum and Blum (1975) on the existence of locking data sets to prove a necessary condition for learnability in the limit of any family of languages in any given metric. This recovers Gold's theorem as a special case. Moreover, when the language family is further assumed to contain all finite languages, the same condition also becomes sufficient for learnability in the limit. 
\end{abstract}
\maketitle
\section{Introduction}
In his pioneering work in the field of Inductive Inference, Gold (1967, \cite{gold}) proved that a collection containing all finite languages and at least one infinite language, over the same fixed alphabet, is not learnable by any algorithm in the exact sense (see Definition \ref{exact learning} below). Within the same framework, Angluin (1980) provided a complete characterization for the learnability of languages. Niyogi (2006, \cite{niyogy}) emphasizes how the concept of locking data sets developed in Blum and Blum (1975, \cite{blumandblum}) holds the key to both aforementioned results. Mathematically, the concept of exact learning within the inductive inference tradition can be seen as the use of a specific type of metric in the definition of learning in the limit. In this short note we use Niyogi's $\varepsilon$-version of a theorem by Blum and Blum on the existence of locking data sets and extend his arguments to prove a necessary condition for learnability in the limit for any family of languages in any given metric. This recovers Gold's classical theorem as a special case. Moreover, when the language family is further assumed to contain all finite languages (such as in Gold's theorem), the same condition becomes also sufficient for learnability in the limit for an arbitrary metric.  Finally, inspired by these results, we exhibit a simple metric for which a family as described in Gold's theorem can be learned but observe that such counting metrics are of no relevance for cognitive models of acquisition. As a final introductory remark, in the event of failing to make it clear elsewhere, let it be registered here that any new result in this note found relevant by the reader should also be credited to P. Niyogi. We now turn to the detailed discussion of what has just been outlined.

\section{Exact learnability in inductive inference}

Let $A$ be a non-empty countable set of symbols we call an alphabet. If $n$ is a natural number, we denote by $A^{n}$ the cartesian product $A \times \cdots \times A$ (with $n$ copies of $A$) and, as usual, define a (formal) language over the alphabet $A$ to be any subset of the universe $\sum^{*}:=\cup_{n=1}^{\infty}A^{n}$. We are only interested in recursively enumerable languages ($r.e.$ languages), that is, those that can be recognized by Turing machines or, equivalenty, which are defined by a grammar of type $0$ in the Chomsky hierarchy (see \cite{chomsky}, \cite{chiswell}). Since both the set of all r.e. languages and languages themselves are countable, we can freely enumerate languages and their elements (usually called words in this general context). We use the terms 'language' and 'grammar' interchangeably (if two or more grammars generate the same language, one can think of them as an equivalence class here).      

The set of all possible data sets is given by $D=\bigcup_{n=1}^{\infty}(\sum^{*})^{n}$ and we denote by $\mathscr{L}$ the set of all $r. e.$ languages. Henceforth, any mention of a collection of languages should be understood to be defined over the same fixed alphabet and to be a subset of $\mathscr{L}$. A learning algorithm $\mathcal{A}$ is then defined to be an effective procedure from $D$ to $\mathscr{L}$. The general problem of language acquisition/learning faced by $\mathcal{A}$ is that of converging (in some specified sense) to a target language as more data are received. There is no universally adopted definition of convergence for learning algorithms and that is one of the main points in which learning theories differ. The notion we discuss here stems from the pioneering non-probabilistic tradition of Gold in Inductive Inference (\cite{gold}). Before we turn to it, however, we need to introduce the notion of a text.

\begin{definition}
A text for a language $L$ is a surjective sequence $(t(k))_{k=1}^{\infty}$ in $L$. An important notational distinction is in order: here we denote the $k$-th element of a text by $t(k)$ and the symbol $t_{k}$ is used to symbolize the $k-$tuple $(t(1),...,t(k))\in (\sum^{*})^{k}$.  
\end{definition}  

Regardless of how children manage to infer their infinite languages from finite data sets and whether or not direct negative evidence plays any (significant) role in the natural language acquisition process, it is a fact that they acquire languages which are largely consistent with the data sets they are exposed to. Any general computational learning theory should then take into consideration this specific cognitive instatiation of learning. This motivates the following notion of learnability based on texts.

\begin{definition} \label{exact learning} Let $L_{T}$ be a target language, $t$ be a text for $L_{T}$, and $\mathcal{A}$ be a learning algorithm. We say $\mathcal{A}$ exactly learns a target language $L_{T}$ on the text $t$ if there is a natural number $n_{0}$ such that $\mathcal{A}(t_{n})=L_{T}$ for all $n\geq n_{0}$. If $\mathcal{A}$ exactly learns $L_{T}$ on every text $t$ for $L_{T}$, we simply say that $\mathcal{A}$ exactly learns $L_{T}$. If $\mathscr{C}$ is a collection of languages, we say $\mathscr{C}$ is learnable in the exact sense if there is a learning algorithm $\mathcal{A}$ that exactly learns every language in $\mathscr{C}$.     
\end{definition}

It is often hypothesized that children lock to a  language after finite time (even if not the target). In many contexts, however, a notion of similarity between languages and approximation to a target is very useful. It is then only natural to introduce a metric on languages. This allows a more general mathematical framework for learnability which, as we shall see below, does not exclude the possibility of working with the important classical notion just discussed.  

\begin{definition} Given a metric $d$ on a subset of $\mathscr{L}$, a target language $L_{T}$ and a text $t$ for $L_{T}$, we say that a learning algorithm $\mathcal{A}$ learns $L_{T}$ on the text $t$ in the limit if
\[
\lim_{k \to \infty}d(\mathcal{A}(t_{k}),L_{T})=0.
\]
If $\mathcal{A}$ learns $L_{T}$ on all texts $t$ for $L_{T}$, we say $\mathcal{A}$ learns $L_{T}$ in the limit. If $\mathscr{C}$ is a set of languages and there is a learning algorithm $\mathcal{A}$ that learns every language in $\mathscr{C}$ in the limit, we say that  $\mathscr{C}$ is learnable in the limit. 
\end{definition} 

\begin{remark}\label{exactmetrics} If we let $d$ be the zero-one metric on $\mathscr{L}$ defined by $d(L,G)=1$ if $L\neq G$ and $d(L,G)=0$ if $L=G$, then we recover the notion of exact learning from definition \ref{exact learning}. In fact, any metric $d$ such that $\inf\{d(L,G):L\neq G\}>0$ would do the trick. We will refer to metrics satisfying this property as \textit{exact metrics}. Whenever we refer to exact learning below, the metric is then assumed to be exact.   
\end{remark}

Using the metric notation above, we now follow Niyogi's presentation (2006) of Gold's theorem, after Blum and Blum (1975). First we establish the necessary concepts and notation:

\begin{definition}
Given a data set $s=(s_{1},...,s_{n}) \in \mathcal{D}$, we define its $length$ to be $n$ and write $length(s)=n$. We also define its range to be the language $\{s_{1},...,s_{n}\}$ and use the notation $range(s)$ to denote it. By an abuse of notation, if $L$ is a language, we write $s \subseteq L$ meaning $range(s) \subseteq L$. For any two data sets $r=(r_{1},...,r_{m})$ and $s=(s_{1},...,s_{n})$ we define their concatenation $r \circ s$ to be the data set of length $m+n$ given by $(r_{1},...,r_{m},t_{1},...,t_{n})$.
\end{definition} 

The key notion in all that follows is that of a locking data set:  

\begin{definition} Given a learning algorithm $\mathcal{A}$, a metric $d$, a language $L$, and a real number $\varepsilon>0$, we define an $\varepsilon-$locking data set for $L$ in $\mathcal{A}$ (with respect to $d$) as a data set consisting of elements from $L$ which, once presented to the learner, leaves it $\varepsilon$-close to $L$ regardless of what other data from $L$ is presented next. Formally, a data set $l_{\varepsilon} \in \mathcal{D}$ is an $\varepsilon$-locking data set for $L$ in $\mathcal{A}$ if:$ \; i) \; l_{\varepsilon} \subseteq L; \; ii) \; d(\mathcal{A}(l_{\varepsilon}),L)<\varepsilon; \; iii) \; d(\mathcal{A}(l_{\varepsilon}\circ s), L)<\varepsilon, \text{for all} \; s \in \mathcal{D} \; \text{such that} \; s\subseteq L.$ 
\end{definition}  

The following result guarantees the existence of locking data sets whenever the language is learnable in the limit.

\begin{theorem} \label{blumandblumepsilon}(Niyogi's $\varepsilon$-version; after Blum and Blum) If $L$ is learnable in the limit, then for every $\varepsilon>0$ there is an $\varepsilon-$locking data set for $L$. 
\end{theorem}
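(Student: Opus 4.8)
The plan is to argue by contradiction, following the classical Blum--Blum construction of a ``bad'' text but carried out with the metric tolerance $\varepsilon$ in place of exact identification. Since $L$ is learnable in the limit, I first fix an algorithm $\mathcal{A}$ that learns $L$ on every text for $L$. Suppose, toward a contradiction, that for some $\varepsilon>0$ there is no $\varepsilon$-locking data set for $L$ in $\mathcal{A}$. The first step is to unpack what this negation buys us: for every data set $\sigma$ with $\sigma \subseteq L$ (so clause (i) already holds), at least one of clauses (ii), (iii) must fail. Failure of (ii) means $d(\mathcal{A}(\sigma),L)\geq\varepsilon$; failure of (iii) means there is some $s\subseteq L$ with $d(\mathcal{A}(\sigma\circ s),L)\geq\varepsilon$. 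In either case I can produce a data set $\rho$ having $\sigma$ as a prefix, still satisfying $\rho\subseteq L$, and with $d(\mathcal{A}(\rho),L)\geq\varepsilon$ (taking $\rho=\sigma$ in the first case and $\rho=\sigma\circ s$ in the second).

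Next I would build a text $t$ for $L$ in stages, interleaving these bad witnesses with a fixed enumeration $w_1,w_2,\dots$ of $L$ so that the resulting sequence is genuinely surjective onto $L$. Starting from $\sigma_1=(w_1)$, at stage $n$ I apply the previous paragraph to the current prefix $\sigma_n$ to obtain $\rho_n$ extending $\sigma_n$ with $d(\mathcal{A}(\rho_n),L)\geq\varepsilon$, and then set $\sigma_{n+1}=\rho_n\circ(w_{n+1})$. Each $\sigma_{n+1}$ extends $\sigma_n$ and lies inside $L$, so the nested prefixes determine a single infinite sequence $t$; because every $w_m$ is appended at some stage, $t$ is indeed a text for $L$.

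Finally I would read off the contradiction. By construction $\rho_n$ is a prefix of $t$, say $\rho_n=t_{k_n}$ with $k_n=\mathrm{length}(\rho_n)$, and since $\mathrm{length}(\sigma_{n+1})=\mathrm{length}(\rho_n)+1$ the indices $k_n$ strictly increase, hence $k_n\to\infty$. As $d(\mathcal{A}(t_{k_n}),L)\geq\varepsilon$ for every $n$, the sequence $d(\mathcal{A}(t_k),L)$ cannot converge to $0$, contradicting the fact that $\mathcal{A}$ learns $L$ in the limit on the text $t$. Therefore the assumption fails and an $\varepsilon$-locking data set exists for every $\varepsilon>0$.

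The main obstacle I anticipate is the bookkeeping in the interleaving step: I must force infinitely many prefixes to be $\varepsilon$-far from $L$ while simultaneously guaranteeing that the object I build is a legitimate text, that is, a surjective sequence with all terms drawn from $L$. Appending $w_{n+1}$ after each bad witness is precisely what reconciles these two competing demands. It is also worth noting that no effectivity of $t$ is required, since learnability in the limit is demanded on \emph{all} texts, computable or not, so a purely existential construction suffices.
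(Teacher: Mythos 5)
Your argument is correct and is precisely the classical Blum--Blum locking-sequence construction with the exact-identification clause relaxed to the $\varepsilon$-ball condition, which is exactly how Niyogi proves this statement; the paper itself states the theorem without proof, importing it from Niyogi (2006). The two points that usually need care --- that the negation of ``$\varepsilon$-locking'' yields, for every $\sigma\subseteq L$, an extension $\rho\subseteq L$ with $d(\mathcal{A}(\rho),L)\geq\varepsilon$, and that interleaving an enumeration of $L$ keeps the constructed sequence a genuine (not necessarily computable) text --- are both handled correctly.
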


For an exact metric, if we choose $\varepsilon<inf\{d(L,G):G \neq L\}$ in the above version, we recover the original theorem: 

\begin{theorem} \label{blumandblum75}(Blum and Blum, 1975) If $L$ is learned exactly by some $\mathcal{A}$, then there is a data set $l$ such that: $ \; i) \; l \subseteq L; \; ii) \; d(\mathcal{A}(l),L)=0; \; iii) \; d(\mathcal{A}(l\circ s), L)=0, \text{for all} \; s \in \mathcal{D} \; \text{such that} \; s\subseteq L.$  
\end{theorem}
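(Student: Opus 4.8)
The plan is to derive this statement as a direct specialization of Theorem~\ref{blumandblumepsilon}, exploiting the defining ``gap'' property of exact metrics. First I would record that the hypothesis of Theorem~\ref{blumandblum75} is really a hypothesis about learning in the limit: since $d$ is an exact metric, Remark~\ref{exactmetrics} tells us that $\mathcal{A}$ learning $L$ exactly (in the sense of Definition~\ref{exact learning}) is the same as $\mathcal{A}$ learning $L$ in the limit with respect to $d$. Hence $L$ is learnable in the limit and Theorem~\ref{blumandblumepsilon} applies to $L$.

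Next I would fix the quantity that controls everything, namely $\delta := \inf\{d(L,G) : G \neq L\}$. By the definition of an exact metric we have $\delta > 0$, so we may choose a real number $\varepsilon$ with $0 < \varepsilon < \delta$. Applying Theorem~\ref{blumandblumepsilon} with this $\varepsilon$ produces an $\varepsilon$-locking data set $l_{\varepsilon}$ for $L$ in $\mathcal{A}$; I would then simply set $l := l_{\varepsilon}$ and claim it witnesses the three conditions of the theorem.

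The verification is where the gap property does the work. Condition (i), $l \subseteq L$, is immediate from clause (i) of the definition of an $\varepsilon$-locking data set. For (ii), clause (ii) gives $d(\mathcal{A}(l),L) < \varepsilon < \delta$; but by the choice of $\delta$ any grammar $G \neq L$ satisfies $d(L,G) \geq \delta$, so a distance strictly below $\delta$ is possible only when $\mathcal{A}(l) = L$, forcing $d(\mathcal{A}(l),L) = 0$. For (iii) I would run the identical argument on clause (iii): for every $s \in \mathcal{D}$ with $s \subseteq L$ we have $d(\mathcal{A}(l \circ s),L) < \varepsilon < \delta$, whence $\mathcal{A}(l \circ s) = L$ and therefore $d(\mathcal{A}(l \circ s),L) = 0$.

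I do not expect a genuine obstacle here: the entire content is the observation that in an exact metric the open ball of radius $\delta$ about $L$ contains no grammar other than $L$, so the approximate inequalities ``$<\varepsilon$'' collapse to the exact equalities ``$=0$''. The only points demanding a little care are confirming that the choice $\varepsilon < \delta$ is legitimate (which needs $\delta > 0$, i.e.\ precisely the exactness hypothesis) and making explicit, in the first step, the translation from exact learning to learning in the limit, so that Theorem~\ref{blumandblumepsilon} is applicable at all.
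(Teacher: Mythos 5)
Your proposal is correct and follows exactly the route the paper takes: the paper derives Theorem~\ref{blumandblum75} by choosing $\varepsilon < \inf\{d(L,G): G \neq L\}$ in Theorem~\ref{blumandblumepsilon}, so that the strict inequalities $<\varepsilon$ force equality with $L$ and hence distance $0$. You merely spell out the verification (and the translation from exact learning to learning in the limit) in more detail than the paper does.
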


Following Blum and Blum, Niyogi then provides elegant proofs of the two main results we are interested in. 

\begin{definition} Henceforth $\mathcal{F}$ denotes the set containing all finite languages and $L_{\infty}$ denotes an infinite language.
\end{definition}

\begin{theorem}(Gold, 1967) The set $\mathcal{F}\cup \{L_{\infty}\}$ is not learnable in the exact sense. 
\end{theorem}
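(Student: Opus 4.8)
The plan is to argue by contradiction, using Theorem \ref{blumandblum75} as the engine. Suppose, toward a contradiction, that the collection $\mathcal{F}\cup\{L_{\infty}\}$ is learnable in the exact sense, and fix a single learning algorithm $\mathcal{A}$ that exactly learns every language in it. In particular $\mathcal{A}$ exactly learns $L_{\infty}$. Working with an exact metric $d$ (as permitted by Remark \ref{exactmetrics}), I would invoke Theorem \ref{blumandblum75} to produce a locking data set $l\in\mathcal{D}$ for $L_{\infty}$ satisfying $l\subseteq L_{\infty}$, $d(\mathcal{A}(l),L_{\infty})=0$, and $d(\mathcal{A}(l\circ s),L_{\infty})=0$ for every $s\in\mathcal{D}$ with $s\subseteq L_{\infty}$. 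Since the metric is exact, each of these distance-zero conditions forces genuine equality of languages, so $\mathcal{A}(l\circ s)=L_{\infty}$ for all such $s$.

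The key step is to turn $l$ into a trap. I would set $F:=range(l)$, the range of the locking data set. Because $l$ has finite length, $F$ is a nonempty finite language, hence $F\in\mathcal{F}$; and since $L_{\infty}$ is infinite while $F$ is finite, $F\neq L_{\infty}$. Crucially, $F=range(l)\subseteq L_{\infty}$, so any data set drawn from $F$ is also drawn from $L_{\infty}$. I would then build a text $t$ for $F$ whose first $length(l)$ entries are exactly the entries of $l$, followed by any enumeration that repeats elements of $F$ forever; this sequence is surjective onto $F$ (indeed $l$ already exhausts $F$), so it is a legitimate text for $F$. Every initial segment $t_{k}$ with $k\geq length(l)$ has the form $t_{k}=l\circ s$ for some $s\in\mathcal{D}$ with $s\subseteq F\subseteq L_{\infty}$.

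Applying the locking property to each such $s$ yields $\mathcal{A}(t_{k})=L_{\infty}$ for all $k\geq length(l)$. But $t$ is a text for the target $F$, and exact learning of $F$ would require $\mathcal{A}(t_{k})=F$ for all sufficiently large $k$. Since $F\neq L_{\infty}$, the algorithm outputs the wrong language on every long enough segment of $t$, so it fails to exactly learn $F\in\mathcal{F}$. This contradicts the assumption that $\mathcal{A}$ exactly learns every member of $\mathcal{F}\cup\{L_{\infty}\}$, completing the proof. The main obstacle, and the conceptual heart of the argument, is the recognition that the locking data set for the infinite language has a finite range which itself names a language in $\mathcal{F}$; the locking conditions then pin $\mathcal{A}$ to $L_{\infty}$ precisely along a valid text for that finite language, and verifying that the constructed $t$ is genuinely surjective and that each segment factors as $l\circ s$ with $s\subseteq L_{\infty}$ is the routine but essential bookkeeping.
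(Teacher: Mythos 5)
Your proof is correct and is essentially the argument the paper attributes to Niyogi (after Blum and Blum) and sketches in the discussion following the theorem: obtain a locking data set for $L_{\infty}$, observe that its range is a finite language in $\mathcal{F}$, and exhibit a text for that finite language on which the locking property pins $\mathcal{A}$ to $L_{\infty}$ forever. The only cosmetic point is that for $k=length(l)$ the segment $t_{k}$ equals $l$ itself rather than $l\circ s$ with $s\in\mathcal{D}$, but condition $(ii)$ of the locking property covers that case directly.
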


The key point in the argument is the interplay between locking data sets and the languages defined by their range. When dealing with a non-exact metric, however, we can only guarantee, by applying Niyogi's $\varepsilon$-version instead of classical Blum and Blum, that the algorithm stays within an $\varepsilon$-ball from the infinite language $L_{\infty}$ and so there would be no contradiction in assuming that $\mathcal{A}$ learns the range of the locking data set in the limit as done in Niyogi's text. We explore this approximation and the fact that any expansion of the language associated with the range of an $\varepsilon$-locking data set is also $\varepsilon$-close to the infinite language. However, although our main result requires identification of the appropriate setting to generalize and makes more comprehensive use of Theorem \ref{blumandblumepsilon}, the main reasoning should be seen as a rather straightforward extension of the proof just discussed. 

Two central classes of grammars (machines) in the Chomsky hierarchy contain (recognize) all finite languages and many infinite languages. As a corollary to Gold's theorem, the family of languages generated by regular grammars (deterministic finite state automata) and context free grammars (non-deterministic pushdown automata) are not learnable in the exact sense. This provided mathematical justification to believe that either natural languages are not context free or learners have more a priori restrictions on the overall class of languages which can be learned. Around a decade later, Angluin provided a full characterization of learnability in the exact sense. 

\begin{theorem}(Angluin, 1980) A family of languages $\mathscr{C}$ is learnable in the exact sense if and only if for each $L \in \mathscr{C}$, there is a subset $D_{L}$ such that if $L'\in \mathscr{C}$ contains $D_{L}$, then $L'$ is not a proper subset of $L$.
\end{theorem}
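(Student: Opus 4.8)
The plan is to prove the two implications separately, using the Blum--Blum locking data set theorem (Theorem \ref{blumandblum75}) for the forward direction and an explicit enumeration-based learner for the converse. For the forward direction I would fix a learning algorithm $\mathcal{A}$ that exactly learns $\mathscr{C}$, fix $L \in \mathscr{C}$, and invoke Theorem \ref{blumandblum75} to obtain a locking data set $l$ for $L$: a data set with $l \subseteq L$, $\mathcal{A}(l)=L$, and $\mathcal{A}(l \circ s)=L$ for every $s \in \mathcal{D}$ with $s \subseteq L$ (here the exact metric turns the inequalities $d(\,\cdot\,,L)<\varepsilon$ into equalities). The natural candidate for the desired subset is then $D_{L}:=range(l)$, a finite subset of $L$.

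The heart of the forward direction is to show that no $L' \in \mathscr{C}$ containing $D_{L}$ can be a proper subset of $L$. I would argue by contradiction: suppose $L' \in \mathscr{C}$ satisfies $D_{L} \subseteq L'$ and $L' \subsetneq L$. Since $range(l)=D_{L} \subseteq L'$, I can build a text $t$ for $L'$ whose initial segment of length $length(l)$ is exactly $l$ and which thereafter enumerates all of $L'$. For every $n \geq length(l)$ we have $t_{n}=l \circ s$ with $range(s) \subseteq L' \subseteq L$, so the locking property forces $\mathcal{A}(t_{n})=L$. Thus $\mathcal{A}$ converges to $L \neq L'$ on a text for $L'$, contradicting the hypothesis that $\mathcal{A}$ exactly learns $L'$. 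This establishes the stated condition.

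For the converse I would construct the classical Angluin learner. Assuming $\mathscr{C}$ is presented as an enumeration $L_{1},L_{2},\dots$ together with a uniform enumeration of the sets $D_{L_{i}}$, the algorithm on input $t_{n}$ outputs the least index $i$ (searched with a computation bound growing in $n$ and with membership tests dovetailed) such that $range(t_{n}) \subseteq L_{i}$ (consistency) and the portion of $D_{L_{i}}$ enumerated so far lies in $range(t_{n})$. To verify convergence on a text $t$ for a target $L$, let $j$ be the least index with $L_{j}=L$. For each $i<j$ I would split into cases: if $L_{i}$ fails to contain $L$, some word of $L$ eventually witnesses inconsistency and $L_{i}$ is discarded permanently; if instead $L \subseteq L_{i}$, then $L \subsetneq L_{i}$ (since $i<j$ forces $L_{i} \neq L$), and the condition applied to $L_{i}$ gives $D_{L_{i}} \not\subseteq L$, so some element of $D_{L_{i}}$ never appears in any $range(t_{n}) \subseteq L$ and $L_{i}$ is never selected. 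Meanwhile $L_{j}=L$ is consistent with every $t_{n}$ and its finite tell-tale $D_{L_{j}} \subseteq L$ eventually appears, so the learner stabilizes on $L_{j}$.

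The step I expect to be the main obstacle is not the combinatorics of convergence but the effectiveness gap between the bare set-theoretic condition as stated and the existence of an actual algorithm. The learner above only halts and stabilizes if each $D_{L_{i}}$ is finite and the family $\{D_{L_{i}}\}$ is uniformly recursively enumerable, so that ``the tell-tale has appeared'' becomes checkable in finite time, and if $\mathscr{C}$ itself is given as a uniformly enumerable indexed family. I would therefore make these uniformity hypotheses explicit, which is exactly the content of Angluin's original formulation, and note that the forward direction already delivers finite subsets $D_{L}=range(l)$, so the finiteness half of the gap is automatic; the genuinely delicate point is the uniform recursive enumerability required to run the search effectively.
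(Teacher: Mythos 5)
The paper does not actually prove this statement: Angluin's theorem is quoted as a known result from the literature (the paper's own contributions are Theorem \ref{mainresult} and Corollary \ref{angluintypetheorem}), so there is no in-paper proof to compare against. Judged on its own terms, your argument is the standard proof of Angluin's characterization and is essentially sound. The forward direction via Theorem \ref{blumandblum75} is correct: $D_{L}:=range(l)$ works, and your contradiction argument (a text for $L'$ beginning with $l$ forces $\mathcal{A}(t_{n})=L$ forever because every continuation $s$ satisfies $s\subseteq L'\subseteq L$) is exactly right. You are also right to flag the effectiveness gap as the real issue, and I would push that point further: as literally written in the paper, the condition omits the word \emph{finite}, and without finiteness the condition is vacuous --- taking $D_{L}=L$ always satisfies ``every $L'\in\mathscr{C}$ containing $D_{L}$ fails to be a proper subset of $L$,'' so the stated biconditional would make every family learnable. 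Your forward direction produces a finite $D_{L}$, which is the substantive content, and your converse genuinely needs both finiteness (so that ``the tell-tale has appeared in $range(t_{n})$'' is a finitely checkable event) and the uniform recursive enumerability of the indexed family $\{(L_{i},D_{L_{i}})\}$ to make the least-index search effective; these are hypotheses of Angluin's actual theorem and must be added to the statement for your proof --- or any proof --- to go through. With those hypotheses made explicit, your convergence case analysis (inconsistent $L_{i}$ eventually discarded; $L\subsetneq L_{i}$ blocked because $D_{L_{i}}\not\subseteq L$) is correct.
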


Note that $\mathcal{F}\cup \{L_{\infty}\}$ does not satisfy the necessary part of this theorem and so Gold's theorem is immediately recovered. Our main result provides a necessary condition for the learnability of any family of languages in the limit for an arbitrary metric.

\section{Results:Learnability in the limit for general metrics}

To state our main result, we define the following notion:

\begin{definition} A sequence of languages  $(L_{n})_{n=1}^{\infty}$ is increasing if $L_{n}\subseteq L_{n+1}$ for every $n$. We say here that the sequence is strictly increasing if it is increasing and for every natural $n$ there is $m>n$ such that $L_{n}$ is properly contained in $L_{m}$. 
\end{definition}

We are now ready to present it.
 
\begin{theorem} \label{mainresult}
If a collection of languages $\mathscr{C}$ is learnable in the limit, then for every infinite language $L_{\infty} \in \mathscr{C}$, if $(L_{n})_{n=1}^{\infty}$ is a strictly increasing sequence of languages in $\mathscr{C}$ such that $\bigcup_{n=1}^{\infty}L_{n}=L_{\infty}$, then $d(L_{n},L_{\infty}) \to 0$ as $n \to \infty$.
\end{theorem}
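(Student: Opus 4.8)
The plan is to extract a single learning algorithm $\mathcal{A}$ witnessing the learnability of $\mathscr{C}$, fix an arbitrary $\varepsilon > 0$, and exploit Theorem \ref{blumandblumepsilon} to obtain an $\varepsilon$-locking data set for the infinite target $L_\infty$. The whole argument then rests on comparing two facts about the \emph{same} algorithm run on one cleverly chosen text: the locking property pins $\mathcal{A}$ near $L_\infty$, while learnability forces $\mathcal{A}$ near $L_n$, and a triangle inequality bounds $d(L_n, L_\infty)$ between these.

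First I would fix $\varepsilon > 0$ and invoke Theorem \ref{blumandblumepsilon} to produce an $\varepsilon$-locking data set $l_\varepsilon$ for $L_\infty$ in $\mathcal{A}$, so that $l_\varepsilon \subseteq L_\infty$ and $d(\mathcal{A}(l_\varepsilon \circ s), L_\infty) < \varepsilon$ for every $s$ with $s \subseteq L_\infty$. Since $range(l_\varepsilon)$ is a finite subset of $L_\infty = \bigcup_{n} L_n$ and the sequence is increasing, all finitely many words appearing in $l_\varepsilon$ already belong to some $L_N$; hence $l_\varepsilon \subseteq L_n$ for every $n \geq N$. This is the step where the union hypothesis enters, and where the finiteness of data sets does the real work.

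Now fix any $n \geq N$. Because $l_\varepsilon \subseteq L_n$, I can build a text $t$ for $L_n$ whose initial segment is exactly $l_\varepsilon$; every longer initial segment then has the form $t_k = l_\varepsilon \circ s$ with $s \subseteq L_n \subseteq L_\infty$, so clause (iii) of the locking definition gives $d(\mathcal{A}(t_k), L_\infty) < \varepsilon$ for all $k \geq length(l_\varepsilon)$. Since $L_n \in \mathscr{C}$, the algorithm also learns $L_n$ in the limit on this very text, so $d(\mathcal{A}(t_k), L_n) \to 0$; choosing $k$ large enough (and past $length(l_\varepsilon)$) makes $d(\mathcal{A}(t_k), L_n) < \varepsilon$ as well. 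The triangle inequality then yields
\[
d(L_n, L_\infty) \leq d(L_n, \mathcal{A}(t_k)) + d(\mathcal{A}(t_k), L_\infty) < 2\varepsilon .
\]
As $n \geq N$ was arbitrary and $\varepsilon > 0$ was arbitrary, this establishes $d(L_n, L_\infty) \to 0$.

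The main obstacle, and the conceptual heart of the argument, is recognizing that a single locking data set for $L_\infty$ can serve simultaneously as a prefix of a text for every sufficiently large $L_n$, so that the locking bound and the limit-learning bound refer to the same hypothesis $\mathcal{A}(t_k)$ and may legitimately be chained. Care is needed to check that clause (iii) applies with $s$ drawn from $L_n$ rather than from $L_\infty$ (it does, since $L_n \subseteq L_\infty$), and to observe that strict increasingness is not in fact required for the estimate: plain monotonicity together with $\bigcup_{n} L_n = L_\infty$ already suffices, so the stated hypothesis is comfortably strong enough.
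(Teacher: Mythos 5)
Your proposal is correct and follows essentially the same route as the paper's own proof: extract an $\varepsilon$-locking data set for $L_\infty$ via Theorem \ref{blumandblumepsilon}, embed it as a prefix of a text for a sufficiently large $L_n$, and chain the locking bound with the limit-learning bound for $L_n$ by the triangle inequality (the paper takes $m\to\infty$ to get $\varepsilon$ where you pick a large $k$ to get $2\varepsilon$, an immaterial difference). Your side observation that plain monotonicity suffices in place of strict increasingness is also accurate.
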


\begin{proof} 
If $\mathscr{C}$ does not contain any infinite language, we have nothing to prove. Otherwise, consider an infinite language $L_{\infty} \in \mathscr{C}$ and fix $\varepsilon>0$. Let $\mathcal{A}$ be an algorithm that learns $\mathscr{C}$, so in particular it learns $L_{\infty}$. Then by Theorem \ref{blumandblumepsilon} there must be a locking data set $l_{\varepsilon}$ which makes $\mathcal{A}$ $\varepsilon$-close to $L_{\infty}$. If $\bigcup_{n=1}^{\infty}L_{n}=L_{\infty}$ and $(L_{n})_{n=1}^{\infty}$ is a strictly increasing sequence, there is a smallest $n_{0}$ such that $range(l_{\epsilon})\subset L_{n}$ for all $n\geq n_{0}$. Fix $n\geq n_{0}$ and define $k=length(l_{\varepsilon})$. Then construct a text for $L_{n}$ by setting $t_{k}=l_{\varepsilon}$ and making sure that every element of $L_{n}-range(l_{\varepsilon})$ (which is possibly an infinite set) is included at least once in the sequence $(t(m))_{m>k}^{\infty}$. Thus, for all $m>k$ it follows that
\[
d(L_{n},L_{\infty})\leq d(L_{n},\mathcal{A}(t_{m}))+d(\mathcal{A}(t_{m}),L_{\infty})<d(L_{n},\mathcal{A}(t_{m}))+ \varepsilon.
\]
Taking the limit as $m \to \infty$ then yields
\[
d(L_{n},L_{\infty})<\varepsilon.
\]
Since the argument applies to every $n\geq n_{0}$, we are finished. 
\end{proof}

\begin{remark} \label{goldrecovery}
	Let $\mathscr{C}=\mathcal{F}\cup\{L_{\infty}\}$ and consider an enumeration $s_{1},s_{2},...$ for the elements of $L_{\infty}$. Define  the sequence of finite languages $L_{n}=\{s_{i}:i\leq n\}$. It is clear that $(L_{n})_{n=1}^{\infty}$ is strictly increasing and $\bigcup_{n=1}^{\infty}L_{n}=L_{\infty}$. If we consider an exact metric $d$ as in Remark  \ref{exactmetrics}, it follows straight from the definition that $d(L_{n},L_{\infty})$ is always greater than a fixed real number, so convergence can never occur as $n$ increases. Therefore, $\mathscr{C}$ is not learnable in the exact sense, as stated by Gold's theorem.   
\end{remark}

Restricting our family to those containing all finite languages, the statement immediately becomes an equivalence.

\begin{corollary} \label{angluintypetheorem} Let $\mathscr{C}$ be any collection of languages containing $\mathcal{F}$. Then $\mathscr{C}$ is learnable in the limit if and only if for every increasing sequence of languages $(L_{n})_{n=1}^{\infty}$ in $\mathscr{C}$ such that $\bigcup_{n=1}^{\infty}L_{n}=L_{\infty} \in \mathscr{C}$, it holds that $d(L_{n},L_{\infty}) \to 0$ as $n \to \infty$.
\end{corollary}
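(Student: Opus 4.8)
The plan is to treat the two implications separately, noting that the forward implication is essentially a restatement of Theorem \ref{mainresult} while the reverse implication is where the hypothesis $\mathcal{F} \subseteq \mathscr{C}$ does the real work. For the forward direction I would assume $\mathscr{C}$ is learnable in the limit and take an increasing sequence $(L_n)_{n=1}^{\infty}$ in $\mathscr{C}$ with $\bigcup_{n=1}^{\infty} L_n = L_\infty \in \mathscr{C}$; the point is to reconcile the word \emph{increasing} in the corollary with the word \emph{strictly increasing} in the main theorem.

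The reconciliation proceeds by a dichotomy obtained from unwinding the definition. If the sequence is not strictly increasing, then there is an index $n_0$ beyond which $L_n = L_{n_0}$ for all $n \geq n_0$, forcing $L_\infty = L_{n_0}$ and hence $d(L_n, L_\infty) = 0$ for all $n \geq n_0$, so convergence is immediate. Otherwise the sequence is strictly increasing; since each proper inclusion contributes at least one new word, the union $L_\infty$ is infinite, and Theorem \ref{mainresult} applies directly to give $d(L_n, L_\infty) \to 0$. The only thing needing care here is this stabilization-versus-strict-growth case split.

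For the reverse direction I would exhibit a single algorithm that learns all of $\mathscr{C}$, namely the memorizing learner $\mathcal{A}$ defined by $\mathcal{A}(s) = range(s)$, which conjectures the finite language consisting of the words seen so far. This is an effective procedure, and since $\mathcal{F} \subseteq \mathscr{C} \subseteq \mathscr{L}$ its outputs are always admissible. Given any target $L_T \in \mathscr{C}$ and any text $t$ for $L_T$, the conjectures $L_k := range(t_k)$ form an increasing sequence of finite languages whose union is exactly $L_T$ by surjectivity of the text. Applying the hypothesis to this sequence yields $d(range(t_k), L_T) = d(\mathcal{A}(t_k), L_T) \to 0$, so $\mathcal{A}$ learns $L_T$ on $t$ in the limit; as $L_T$ and $t$ were arbitrary, $\mathcal{A}$ learns $\mathscr{C}$ in the limit.

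The main obstacle is conceptual rather than computational: one must recognize that the convergence condition on nested unions is precisely the assertion that the memorizing learner's conjectures approach the target, and that closure under finite languages is exactly what guarantees those conjectures are legitimate elements of the hypothesis space at every stage. Once this is seen, both directions collapse into short verifications, with the only genuine bookkeeping being the case split in the forward direction.
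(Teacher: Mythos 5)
Your proof is correct and follows essentially the same route as the paper's: the forward direction reduces to Theorem \ref{mainresult} and the reverse direction uses the memorizing learner $\mathcal{A}(s)=range(s)$ together with the hypothesis applied to the increasing sequence of ranges. Your reconciliation of ``increasing'' versus ``strictly increasing'' is actually more careful than the paper's one-line citation of the theorem; the only detail you gloss over is that for a finite target $L_{T}$ the convergence should be observed directly (the learner locks onto $L_{T}$ after finitely many steps, as the paper notes) rather than via the hypothesis, which as stated concerns unions equal to an infinite language.
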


\begin{proof}
One implication is given by the previous theorem. For the sufficient part, given an initial data set $t_{k}$ from a text $t$ of a target $L_{T} \in \mathscr{C}$, define a learner $\mathcal{A}$ which associates data sets to their ranges, that is, $\mathcal{A}(t_{k})=range(t_{k})$. If $L_{T}$ is finite, the learner clearly identifies and locks onto $L_{T}$ after finite steps. If $L_{T}$ is infinite, then $(\mathcal{A}(t_{n}))_{n\geq k}$ is a strictly increasing sequence of finite languages whose union is equal to $L_{\infty}$. Therefore, by hypothesis, $d(\mathcal{A}(t_{n}),L_{\infty})$ goes to zero as $n$ increases. 
\end{proof}
   
We saw that Theorem \ref{mainresult} recovers Gold's classical result and leads naturally to a characterization of learnability for a wide class of language families in any given metric. However, there is one aspect of the theorem we would like to emphasize: how it indicates that learnability of large language families can be associated with theoretically irrelevant metrics. We conclude by briefly elaborating on that next.       

Given an infinite language $L_{\infty}$, by cardinal arithmetic one can always write it as a countable union of countable languages, that is, $L_{\infty}=\cup_{i=1}^{\infty}G_{i}$, where each $G_{i}$ has cardinality $Card(G_{i})\leq \aleph_{0}$. Therefore, one can always construct a strictly increasing sequence by defining
\begin{equation} \label{countableuniondecomposition}
	L_{n}=\bigcup_{i=1}^{n}G_{i},
\end{equation}
which has the property that $L_{\infty}=\bigcup_{n=1}^{\infty}L_{n}$. Remark \ref{goldrecovery} provides an important particular example of such construction. Hence, if every $L_{n}$ and $L_{\infty}$ are learnable in a metric $d$, by the necessary condition of Theorem \ref{mainresult}, $d$ can be seen as simply counting, in the sense of being defined using information of set cardinality and not language structure. In other words, learnability of families large enough to include an infinite language $L_{\infty}$ and all $L_{n}$ from a countable union decomposition of $L_{\infty}$, as in (\ref{countableuniondecomposition}), can be associated to a counting metric. More precisely, if $d'$ is a metric by which every $L_{n}$ and $L_{\infty}$ are learnable in the limit, then for infinitely many $m$, $d'(L_{m},L_{\infty})<d'(L_{n},L_{\infty})$ if and only if $Card(L_{m}\cap L_{\infty})>Card(L_{n}\cap L_{\infty})$. Therefore, $d'$ preserves the order of the metric defined below for some infinite subset of $\{L_{n}:n \in \mathbb{N}\}\times\{L_{\infty}\}$:   

\begin{example} (A metric in which $\mathcal{F}\cup\{L_{\infty}\}$ is learnable by Corollary \ref{angluintypetheorem}) \\
Define	
\begin{align*}
	&d(L,L_{\infty})=d(L_{\infty},L)=1, \; \text{if} \; L \in \mathcal{F} \; \text{and} \; L\cap L_{\infty}= \emptyset. \\
	&d(L,L_{\infty})=\frac{1}{Card(L\cap L_{\infty})}, \; \text{if} \; L \in \mathcal{F} \; \text{and} \; L\cap L_{\infty}\neq \emptyset. \\
	&d(L,G)=d(L,L_{\infty})+d(G,L_{\infty}), \; \text{if} \; L,G \in \mathcal{F} \text{and} \; F\neq G. \\
	&d(L,G)=0, \; \text{if} \; L=G.
\end{align*}
\end{example}

It has been well understood for at least the past sixty years or so, in both formal and cognitive natural language studies, that counting does not play any (important) role in the acquisition process nor in representation of language structure \footnote{We remark that actively counting and frequency effects should not be confused and we refer to the former in our statement.}. This means the implications of Gold's result for the cognitive sciences remain largely valid in an extended setting.


\begin{thebibliography}{9}    

\bibitem{gold} Gold M. Language identification in the limit. Information and Control; 1967;10:  447--474.   \\


\bibitem{niyogy} Niyogi P. The computational nature of language learning and evolution. MIT press; 2006. \\


\bibitem{blumandblum} Blum, M, Blum L. Towards a mathematical theory of inductive inference. Information and Control; 1975; 28: 125--155. \\


\bibitem{angluin} Angluin, D. Inductive inference of formal languages from positive data. Information and Control; 1980; 45: 117--135. \\


\bibitem{chomsky} Chomsky N. On certain formal properties of grammars. Information and Control, 1959; 2: 137--167.  \\


\bibitem{chiswell} Chiswell I. A course in Formal Languages, Automata and Groups. Springer; 2009.        \\

             

\end{thebibliography}
\end{document}